\documentclass{amsart}
\usepackage{graphicx}
\usepackage{amssymb}
\usepackage{multirow}
\vfuzz2pt 
\hfuzz2pt 
\newtheorem{thm}{Theorem}[section]

\newtheorem{lem}[thm]{Lemma}

\theoremstyle{definition}
\newtheorem{defn}[thm]{Definition}
\theoremstyle{remark}

\numberwithin{equation}{section}

\begin{document}

\title[rule-weighted vs terminal-weighted context-free grammars]{Rule-weighted and terminal-weighted context-free grammars have identical expressivity}%
\author{Yann Ponty}%
\address{CNRS/LIX -- Ecole Polytechnique -- Palaiseau, France\\
AMIB Project -- INRIA -- Saclay, France\\
}%
\email{yann.ponty@lix.polytechnique.fr}%

\keywords{Weighted Context-Free Grammars; Random Generation; Normal forms}%

\newcommand{\ProbT}{p_{\PondT,n}}
\newcommand{\PondT}{\pi}
\newcommand{\ProbR}{p_{\PondR,n}}
\newcommand{\PondR}{\lambda}
\newcommand{\PondRA}{\lambda^{\blacktriangleright}}
\newcommand{\PondRB}{\lambda^{\blacktriangleleft}}
\newcommand{\Eqdef}{:=}
\newcommand{\GramWT}{\mathcal{G}_\PondT}
\newcommand{\GramWR}{\mathcal{G}_\PondR}
\newcommand{\GramWRG}{\mathcal{H}_{\PondR'}}
\newcommand{\R}{\mathcal{R}}
\newcommand{\Gram}{\mathcal{G}}
\newcommand{\NtSet}{\mathcal{N}}
\newcommand{\Nt}{N}
\newcommand{\T}{t}
\newcommand{\Voc}{\Sigma}
\newcommand{\ProdRules}{\mathcal{P}}
\newcommand{\Axiom}{\mathcal{S}}
\newcommand{\Def}[1]{\textbf{#1}}
\newcommand{\Production}{\rightarrow}
\newcommand{\Lang}[1]{\mathcal{L}(#1)}

\begin{abstract}
Two formalisms have recently been proposed to perform a non-uniform random generation of combinatorial objects based on context-free grammars. The
former, introduced by Denise \emph{et al}, associates weights with letters, while the latter, recently explored by Weinberg~\emph{et al} in the context of random generation, associates weights to transitions. In this short note, we use a trivial modification of the Greibach Normal Form transformation algorithm, due to Blum and Koch, to show the equivalent expressivities of these two formalisms.
\end{abstract}
\maketitle
\section{Introduction}

The random generation of combinatorial objects is one of the natural applications of enumerative combinatorics. 
Following general principles outlined by Wilf~\cite{wilf77}, Flajolet~\emph{et al}~\cite{flajoletcalculus} proposed a fully-automated algebraic approach 
for the extensive class of decomposable combinatorial objects, a large class of objects that includes context-free languages. This pioneering work was later 
completed by the introduction of Boltzmann samplers, an alternative family of random generation algorithms based on analytical properties of the underlying generating functions~\cite{DuFlLoSc02}.
However, these works only addressed the uniform distribution, while many applications of random generation (e.g. in RNA bioinformatics~\cite{DiLa03}) require
non-uniform distributions to be modeled.

To that purpose, Denise~\emph{et al} \cite{Denise2000} introduced (terminal)-weighted grammars, a non-uniform framework where the terminal symbols (letters)
are associated with a real positive value, inherited multiplicatively by words in the language.
Such weights were then used, through a trivial renormalization, to induce a probability distribution on the finite set of words of a given length.
Generic random generation algorithms were proposed~\cite{Denise2000} and implemented within a general random generation toolbox~\cite{PoTeDe06}.
Analytic and numerical approaches were proposed for figuring out suitable set of weights that would mimic a given, observed, distribution~\cite{Denise2010}.
Finally, a multidimensional rejection scheme was explored to sample words of a given composition, yielding efficient algorithms by generalizing the 
principles of Boltzmann sampling~\cite{Bodini2010}.

More recently, Weinberg~\emph{et al}~\cite{Weinberg2010} proposed an alternative definition for weighted grammars, associating positive 
real-values to rules instead of terminal letters. The authors proposed a random generation procedure based on formal grammar manipulations, followed by a
call to an unranking algorithm due to Martinez and Molinero~\cite{Martinez00ageneric}. However, the relative expressivities, in term of the distribution 
induced by the respective weighting schemes, of the two formalisms were not compared.

In this short note, we establish the equivalence of the two formalisms with respect to their induced distributions. After this short introduction, 
we remind in Section~\ref{sec:def} the definitions of terminal-weighted and rule-weighted grammars. Then we turn to an analysis of the relative 
expressivities of the two formalisms, and establish in Section~\ref{sec:TtoW} that any terminal-weighted grammar can be \emph{simulated} by a 
rule-weighted grammar. Furthermore, we use a Greibach Normal Form transformation to prove, in Section~\ref{sec:WtoT}, that any rule-weighted grammar
can be transformed into a terminal-weighted grammar inducing the same probability distribution, from which one concludes on the equivalence of the two formalisms.
We conclude in Section~\ref{sec:conclusion} with some closing remarks and perspectives.

\section{Definitions}\label{sec:def}
	A \Def{context-free grammar} is a 4-tuple $\Gram=(\Voc,\NtSet,\ProdRules,\Axiom)$ where
	\begin{itemize}
		\item $\Voc$ is the alphabet, i.e. a finite set of terminal symbols, also called \Def{letters}.
		\item $\NtSet$ is a finite set of \Def{non-terminal symbols}.
		\item $\ProdRules$ is the finite set of production \Def{rules} of the form $\Nt\Production X$,
		where $\Nt\in\NtSet$ is a non-terminal and $X\in \{\Voc\,\cup\,\NtSet\}^*$ is a sequence of letters and non-terminals.
		\item $\Axiom$ is the \Def{axiom} of the grammar, i. e. the initial non-terminal.
	\end{itemize}
We will denote by $\Lang{\Gram}_n$ the set of all words of length $n$ generated by $\Gram$.
This set is generated by iteratively applying production rules to non-terminals until a word in $\Voc^*$ is obtained.

Note that the non-terminals on the right-hand side of a production rule can be independently derived. It follows that the derivation process, starting from the
initial axiom and ending with a word $w$ over the terminal alphabet, can be represented by a \Def{parse tree} $d_w$. This (ordered directed) tree associates production rules to each internal
node and terminal letters to each leaf, such that the $i$-th child of a node labeled with $\Nt \to x_1.\cdots.x_k$ is either a terminal letter $x_i\in \Voc$ or a further derivation of $x_i\in \ProdRules$, starting from a root node that derives the axiom $\Axiom$.

{\bf Assumptions: }Let us assume, for the sake of simplicity, that the grammars considered in the following are \Def{unambiguous}, i.e. that any word in $\Lang{\Gram}_n$
has exactly one associated parse tree. Moreover, let us assume, without loss of generality, that the grammar is given using a \Def{binary variant} of the \Def{Chomsky Normal Form (CNF)},
which partitions the non-terminals into four classes, restricting their production rules to:
\begin{itemize}
  \item {\bf Axiom:} $\Axiom \to \Nt$, $\Nt\in\NtSet$, and/or $\Axiom \to \varepsilon$.
  \item {\bf Unions:} $\Nt \to \Nt'\;|\;\Nt''$, such that $\Nt,\Nt',\Nt''\in\NtSet/\{\Axiom\}$.
  \item {\bf Products:} $\Nt \to \Nt'.\Nt''$, such that $\Nt,\Nt',\Nt''\in\NtSet/\{\Axiom\}$.
  \item {\bf Terminals:} $\Nt \to t, \;t\in\Voc$.
\end{itemize}
Finally, we will postulate the absence of non-productive terminals, e.g. having rules of the form $\Nt \to \Nt.\Nt'$.

\subsection{Terminal-Weighted Grammars}
A non-uniform distribution can be postulated on the language generated by the grammar. To that purpose, two formalisms have been independently proposed, reminded here for the sake of completeness.
	\begin{defn}[(Terminal)-Weighted Grammar~\cite{Denise2000}]
		A {terminal-weighted grammar} $\GramWT$ is a 5-tuple $\GramWT=(\PondT,\Voc,\NtSet,\ProdRules,\Axiom)$ where:
\begin{itemize}
		\item $(\Voc, \NtSet, \ProdRules, \Axiom)$ defines a context-free grammar,
		\item $\PondT:\Voc\to\mathbb{R}^{+}$ is a \Def{terminal-weighting function} that associates a non-null positive real-valued weight $\PondT_\T$ to each
		terminal symbol $\T$.
\end{itemize}
The weight of a word $w\in\Lang{\GramWT}$ is then given by $$\PondT(w)=\prod_{\T\in \Voc} \PondT_{\T}^{|w|_{\T}}$$ and extended into a
probability distribution over $\Lang{\Gram}_n$ by
$$ \ProbT(w) = \frac{\PondT(w)}{\sum_{w\in \Lang{\Gram}_n} \PondT(w)}.$$
	\end{defn}

\subsection{Rule-Weighted Grammars}
	\begin{defn}[(Rule)-Weighted Grammar~\cite{Weinberg2010}]
		A {rule-weighted grammar} $\GramWR$ is a 5-tuple $\GramWR=(\PondR,\Voc,\NtSet,\ProdRules,\Axiom)$ where:
\begin{itemize}
		\item $(\Voc, \NtSet, \ProdRules, \Axiom)$ defines a context-free grammar,
		\item $\PondR:\ProdRules\to\mathbb{R}^{+}$ is a \Def{rule-weighting function} that associates a positive non-null real-valued\footnote{More precisely, Weinberg~\emph{et al} restrict their formalism to rational weights, based on the rationale that real-numbers would lead to unstable computations. However their framework could easily be extended to any computable real numbers without loss of precision, e.g. by implementing a \emph{confidence intervals} approach described in Denise and Zimmermann~\cite{DeZi99}, therefore we consider a trivial extension of this formalism here.} weight $\PondR_r$ to each derivation $r\in\ProdRules$, using the notation $\Nt \to_{y} X$ to indicate the association of a weight $\PondR_r = y$ to a rule $r=(\Nt \to X)$.
\end{itemize}
The weight function $\PondR$ can then be extended multiplicatively over $\Lang{\GramWR}$ through
$$\PondR(w)=\prod_{\substack{r\in d_w\\ r =(\Nt \to_{\PondR_{r}} X)}} \PondR_{r},\; \forall w \in \Lang{\GramWR}$$
where $d_w$ is the (unique) parse tree of $w$ in $\GramWR$.
This induces a probability distribution over $\Lang{\GramWR}_n$ such that
$$ \ProbR(w) = \frac{\PondR(w)}{\sum_{w\in \Lang{\GramWR}_n} \PondR(w)}.$$
	\end{defn}

\section{Any terminal-weighted distribution can be obtained using a rule-weighted grammar}\label{sec:TtoW}
\begin{thm}
  For any terminal-weighted grammar $\GramWT$, there exists a rule-weighted grammar $\GramWR$, $\Lang{\GramWT}=\Lang{\GramWR}$, inducing an identical probability distribution.
\end{thm}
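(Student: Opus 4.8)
The plan is to keep the underlying context-free grammar $(\Voc,\NtSet,\ProdRules,\Axiom)$ of $\GramWT$ untouched — so that the generated language is trivially preserved, $\Lang{\GramWT}=\Lang{\GramWR}$ — and simply design a rule-weighting function $\PondR$ that reproduces the weight $\PondT(w)$ of every word $w$ up to a global constant, from which equality of the induced distributions $\ProbT=\ProbR$ follows immediately after renormalization. Since we assume the binary CNF variant, the only rules that "see" a terminal are the terminal rules $\Nt\to\T$; the natural choice is therefore to set $\PondR_{(\Nt\to\T)}\Eqdef\PondT_\T$ for each such rule, and $\PondR_r\Eqdef 1$ for every union rule, product rule, and axiom rule.

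The key step is then to verify that this choice gives $\PondR(w)=\PondT(w)$ for every $w\in\Lang{\GramWT}_n$. First I would recall that, in the binary CNF form, a parse tree $d_w$ of $w$ contains exactly one terminal rule per letter occurrence of $w$: more precisely, for each $\T\in\Voc$ there are exactly $|w|_\T$ leaves of $d_w$ produced by a rule of the form $\Nt\to\T$ (this is where the "absence of non-productive terminals" assumption and unambiguity are used — every terminal in $w$ is introduced by exactly one terminal rule, and no spurious terminal rules appear). Consequently the product defining $\PondR(w)$ factors as
$$\PondR(w)=\prod_{r\in d_w}\PondR_r=\prod_{\T\in\Voc}\PondR_{(\Nt\to\T)}^{\,|w|_\T}=\prod_{\T\in\Voc}\PondT_\T^{\,|w|_\T}=\PondT(w),$$
since all non-terminal rules contribute a factor $1$.

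Having established $\PondR(w)=\PondT(w)$ pointwise on $\Lang{\Gram}_n$, the rest is routine: summing over $\Lang{\Gram}_n$ gives equal normalizing constants, hence
$$\ProbR(w)=\frac{\PondR(w)}{\sum_{w'\in\Lang{\Gram}_n}\PondR(w')}=\frac{\PondT(w)}{\sum_{w'\in\Lang{\Gram}_n}\PondT(w')}=\ProbT(w)$$
for every $w$ and every length $n$, which is exactly the claimed identity of distributions. There is essentially no obstacle here; the only point requiring a little care is the combinatorial bookkeeping in the displayed factorization above, namely the claim that each terminal occurrence corresponds to exactly one weighted terminal rule in the parse tree — a fact that hinges on the normal-form assumptions stated in Section~\ref{sec:def}. (This is in contrast to the converse direction of Section~\ref{sec:WtoT}, where a genuine grammar transformation — the Greibach Normal Form construction — is needed, precisely because an arbitrary rule-weighting cannot in general be "pushed onto the terminals" without restructuring the grammar.)
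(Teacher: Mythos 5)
Your proposal is correct and follows essentially the same route as the paper: keep the grammar unchanged, assign each terminal rule $\Nt\to\T$ the weight $\PondT_\T$ and every other rule the weight $1$, observe that occurrences of $\T$ in $w$ are in bijection with applications of terminal rules in the parse tree so that $\PondR(w)=\PondT(w)$, and conclude by equality of the normalizing sums. No gaps; the bookkeeping point you flag about one terminal rule per letter occurrence is exactly the observation the paper makes.
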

\begin{proof}
  We give a constructive proof of this theorem. For any grammar $\GramWT=(\PondT,\Voc,\NtSet,\ProdRules,\Axiom)$, let us
  consider the rule-weighted grammar defined by $\GramWR:=(\PondR,\Voc,\NtSet,\ProdRules,\Axiom)$, such that $\PondR(\Nt \to \T)=\PondT_\T$ and $\PondR(\cdot)=1$ otherwise.

  Clearly, the production rules and axioms of $\GramWR$ and $\GramWT$ are identical, therefore one has $\Lang{\GramWR} = \Lang{\GramWT}$ and, in particular, $$\Lang{\GramWR}_n = \Lang{\GramWT}_n,\; \forall n\ge 0.$$

  Let us now remark that any terminal letter $\T$ in a produced word $w$ results from the application of a rule of the form $\Nt \to \T$,
and that the parse trees in $\GramWT$ and $\GramWR$ of any word $w\in \Lang{\GramWT}=\Lang{\GramWR}$ are identical.
  It follows that the occurrences of the terminal letter $\T$ in $w$ are in bijection with the occurrences of the $\Nt \to \T$ rule in its parse tree $d_w$,
  and therefore $$\PondR(w)=\prod_{r =(\Nt \to_{\PondR_{r}} \T)\in d_w} \PondR_{r} = \prod_{(\Nt \to_{\PondR_{r}} \T)\in d_w} \PondT_{t} = \PondT(w),\; \forall w\in \Lang{\GramWT}.$$
  Since $\Lang{\GramWR}_n = \Lang{\GramWT}_n$, then one has $$\sum_{w\in \Lang{\GramWR}_n} \PondR(w) = \sum_{w\in \Lang{\GramWT}_n} \PondT(w),$$ and we conclude that, for any length $n\ge 0$, one has
  $$ \ProbT(w) = \frac{\PondR(w)}{\sum_{w\in \Lang{\GramWR}_n} \PondR(w)} = \frac{\PondT(w)}{\sum_{w\in \Lang{\GramWT}_n} \PondT(w)} = \ProbR(w)$$
  which proves our claim.
\end{proof}
\section{Any rule-weighted distribution can be obtained using a terminal-weighted grammar}\label{sec:WtoT}
\begin{thm}
  For any rule-weighted grammar $\GramWR$, there exists a terminal-weighted grammar $\GramWT$, $\Lang{\GramWR}=\Lang{\GramWT}$, inducing an identical probability distribution.
\end{thm}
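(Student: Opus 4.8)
The plan is to reduce the problem to a structural normal form in which rule weights can be ``pushed'' onto terminal letters. The key observation is that in a grammar in \emph{Greibach Normal Form (GNF)}, every production rule has the shape $\Nt \to \T\,\Nt_1\cdots\Nt_k$, i.e.\ it emits exactly one terminal letter. Consequently, in any parse tree the rules are in bijection with the leaves (terminal occurrences): reading off the unique letter produced by each rule gives a perfect matching between ``rule applications'' and ``letter occurrences'' in the derived word. If we could arrange that each terminal letter is emitted by \emph{exactly one} rule, we could transfer that rule's weight directly to the letter, and multiplicativity over the parse tree would coincide with multiplicativity over the word. This is precisely the role of the \emph{Blum--Koch} refinement of the GNF construction, which produces a grammar where distinct rules emitting the same terminal can be separated by introducing a fresh copy of the terminal alphabet indexed by rules.

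Concretely, I would proceed in the following steps. \textbf{Step 1: GNF transformation preserving weights.} Starting from the rule-weighted CNF grammar $\GramWR$, apply the (weighted) Blum--Koch algorithm. The crucial point is that each elementary transformation step of the algorithm — substitution of a non-terminal by its productions, and left-recursion elimination via new non-terminals — can be carried out so that the \emph{product of rule weights along any parse tree is preserved}: when a rule is substituted into another, the new combined rule receives the product of the two weights; when left recursion is unfolded, the weights are distributed so that every old derivation maps to a new derivation of equal weight, and this map is a bijection on parse trees of each word. One must verify that unambiguity is preserved (Blum--Koch does preserve it) and that $\Lang{\cdot}_n$ is unchanged. \textbf{Step 2: Separate terminals by rule.} In the resulting weighted GNF grammar $\GramWRG = (\PondR',\Voc,\NtSet',\ProdRules',\Axiom)$, for each rule $r = (\Nt \to \T\,\Nt_1\cdots\Nt_k)$ introduce a fresh terminal symbol $\T_r$, replacing $\T$ by $\T_r$ in that rule only. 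Call the new grammar $\GramWT$. Now each terminal symbol $\T_r$ is emitted by exactly one rule, so occurrences of $\T_r$ in a word $w$ are in bijection with applications of rule $r$ in the parse tree $d_w$. \textbf{Step 3: Transfer weights to terminals.} Define the terminal-weighting function by $\PondT_{\T_r} := \PondR'_r$. Then for every word $w$, $\PondT(w) = \prod_{\T_r} \PondT_{\T_r}^{|w|_{\T_r}} = \prod_{r \in d_w} \PondR'_r = \PondR'(w)$, using the bijection from Step 2. Since the languages (indexed by length) and the weights of all words agree between $\GramWRG$ and $\GramWT$, the normalizing sums agree, hence the induced distributions coincide: $\ProbR(w) = \ProbT(w)$ for all $n$ and all $w$.

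The main obstacle is \textbf{Step 1}: verifying that the Blum--Koch construction can be made weight-faithful. This requires tracking weights through the left-recursion-elimination phase, where a left-recursive block $\Nt \to \Nt\alpha_i \mid \beta_j$ is replaced by $\Nt \to \beta_j \mid \beta_j Z$ and $Z \to \alpha_i \mid \alpha_i Z$; one has to check that the natural weight assignment ($\PondR'$ of a new rule = product of the $\PondR_r$ of the old rules it encodes) yields a bijection between old and new parse trees of each word with equal weight products, and that no ambiguity is introduced. There is also a minor technical wrinkle: the excerpt's CNF allows chain rules $\Nt \to \Nt' \mid \Nt''$ and an $\Axiom \to \varepsilon$ option; these must be folded in before or during the GNF transformation (the empty word is handled separately, as GNF grammars are $\varepsilon$-free, and its probability is unaffected since its weight is the empty product $1$ in both formalisms). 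Apart from these bookkeeping points, Steps 2 and 3 are the same routine bijection argument already used in the proof of the previous theorem, run in the opposite direction. One final remark: the constructed $\GramWT$ has an enlarged terminal alphabet, so strictly speaking it induces the original distribution only after relabeling each $\T_r$ back to $\T$; since this relabeling is a length-preserving, weight-irrelevant projection, the distribution over words of length $n$ is genuinely the same, which suffices for the stated equivalence of expressivity.
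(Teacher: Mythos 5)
Your proposal follows essentially the same route as the paper: a weight-faithful Blum--Koch transformation to Greibach Normal Form (the content of the paper's Lemma~\ref{lem:tech}, proved there by induction on the number of derivations over the rule-expansion and left-recursion-reversal steps), followed by duplicating the terminal alphabet with one fresh letter $\T_r$ per rule $r$ and setting $\PondT_{\T_r} := \PondR'_r$, with the empty word handled separately. Your Step~1 is stated as a verification to be carried out rather than carried out in full, but the properties you identify (product weights on substituted rules, a weight-preserving bijection on parse trees under left-recursion elimination) are exactly what the paper establishes, and your remark about relabeling $\T_r$ back to $\T$ is a point the paper glosses over.
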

\begin{proof}
  Let us first remind the definition of the \Def{Greibach Normal Form} (GNF), which requires each production rule to be of the form:
\begin{itemize}
  \item $\Axiom \to \varepsilon$, where $S$ is the axiom,
  \item $\Nt \to \T.X$, where $\T\in \Voc$ and $X\in\{\Voc\cup \NtSet/\{\Axiom\}\}^*$.
\end{itemize}

  Based on Lemma~\ref{lem:tech} proven below, we know that any rule-weighted grammar in Chomsky-Normal Form can be transformed into a GNF grammar that generates
  the same language and induces the same distribution.  Let us then assume, without loss of generality, that the input grammar $\GramWR=(\PondR,\Voc,\NtSet,\ProdRules,\Axiom)$ 
  is in GNF.
  
  By duplicating the vocabulary, one easily builds a terminal-weighted grammar that
  induces the same probability distribution as $\GramWR$. Namely, let us define $\GramWT = (\PondT,\Voc_{\PondT},\NtSet,\ProdRules_{\PondT},\Axiom)$ such that $\Voc_{\PondT} := \{t_r\}_{r\in \ProdRules}$, $\PondT(t_r) := \PondR(r)$, and
  $$\ProdRules_{\PondT} := \{\Nt \to \T_r.X\;|\; r=(\Nt \to_x \T.X) \in \ProdRules\} \cup \{S \to \varepsilon\; |\; S \to_x \varepsilon \in \ProdRules\}.$$
    Clearly, each terminal letter in a word produced by $\GramWT$ can be unambiguously associated with a rule of $\GramWR$, therefore the weight of any non-empty word is 
    preserved. Furthermore, the generated languages of $\GramWR$ and $\GramWT$ are identical, so the distribution is preserved. Finally, the weight of the empty word $\varepsilon$, implicitly set to $1$ in the new grammar, may generally differ from its original value $\PondR(S \to \varepsilon)$ in $\GramWR$. However, $\varepsilon$ is the only word of length $0$, and therefore has probability $1$ in both grammars. We conclude that the probability distribution induced by $\GramWR$ is the same as that of $\GramWT$.

  \begin{lem}\label{lem:tech}
    For any rule-weighted grammar $\GramWR=(\PondR,\Voc,\NtSet,\ProdRules,\Axiom)$, there exists a grammar $\GramWRG$ in Greibach Normal Form inducing the same distribution.
  \end{lem}
  \begin{proof}
  Again, we use a constructive proof, showing that the weight distribution can be preserved during the transformation of the grammar performed by the Blum and Koch normalisation algorithm~\cite{Blum1999}. Let us state the algorithm:
  \begin{enumerate}
    \item Renumber non-terminals in any order, starting with the Axiom $\Axiom \Rightarrow \Nt_1$.
    \item For $k=1$ to $|\NtSet|$, consider the non-terminal $\Nt_k$:\label{step:b}
    \begin{enumerate}
    \item For each $r=(\Nt_k \to_x \Nt_j.X) \in \ProdRules$, such that $j<k$ and
     $$\Nt_j \to_{x_1} X_1, \Nt_j \to_{x_2} X_2, \cdots, \Nt_j\to_{x_m} X_m,$$
    replace $r$ in $\ProdRules$ as follows
    \begin{equation*}
    \begin{array}{rcl|rcl}
      \multicolumn{3}{c|}{\text{Former rule(s)}}&\multicolumn{3}{c}{\text{New rule(s)}} \\ \hline \hline
      \multirow{4}{*}{$\Nt_k$}&\multirow{4}{*}{$\to_x$}& \multirow{4}{*}{$\Nt_j.X$} & \Nt_k &\to_{x\cdot x_1}& X_1.X\\
      &&& \Nt_k &\to_{x\cdot x_2}& X_2.X\\
      &&& \multicolumn{3}{|c}{\vdots}\\
      &&& \Nt_k&\to_{x\cdot x_m}& X_m.X.
    \end{array}
    \end{equation*}
    \item Fix any left-recursive non-terminal $\Nt_k$ by replacing its rules as follows, using an alternative \emph{chain-rule} construct:
    \begin{equation*}
    \begin{array}{rcl|rcl}
      \multicolumn{3}{c|}{\text{Former rule(s)}}&\multicolumn{3}{c}{\text{New rule(s)}} \\ \hline \hline
       \multirow{2}{*}{$\Nt_k$} &\multirow{2}{*}{$\to_{x_1}$}& \multirow{2}{*}{$\Nt_k.X_1$} & \Nt'_k &\to_{x_1}& X_1.\Nt'_k\\
            &         &           & \Nt'_k &\to_{x_1}& X_1\\
       \multicolumn{3}{c}{\vdots}& \multicolumn{3}{|c}{\vdots}\\
       \multirow{2}{*}{$\Nt_k$} &\multirow{2}{*}{$\to_{x_m}$}& \multirow{2}{*}{$\Nt_k.X_m$} & \Nt'_k &\to_{x_m}& X_m.\Nt'_k\\
            &         &           & \Nt'_k &\to_{x_m}& X_m\\ \hline
       \multirow{2}{*}{$\Nt_k$} &\multirow{2}{*}{$\to_{y_1}$}& \multirow{2}{*}{$Y_1$} & \Nt'_k &\to_{y_1}& Y_1.\Nt'_k\\
            &         &           & \Nt'_k &\to_{y_1}& Y_1\\
       \multicolumn{3}{c}{\vdots}& \multicolumn{3}{|c}{\vdots}\\
       \multirow{2}{*}{$\Nt_k$} &\multirow{2}{*}{$\to_{y_{m'}}$}& \multirow{2}{*}{$Y_{m'}$} & \Nt'_k &\to_{y_{m'}}& Y_m.\Nt'_k\\
            &         &           & \Nt'_k &\to_{y_{m'}}& Y_{m'}\\ \hline
    \end{array}
    \end{equation*}

    \end{enumerate}
    \item For $k=|\NtSet|$ down to $1$, consider the non-terminal $\Nt_k$:\label{step:c}
    \begin{enumerate}
      \item For each $r=(\Nt_k \to_x \Nt_j.X) \in \ProdRules$, such that $j>k$ and
      $$\Nt_j \to_{x_1} X_1, \Nt_j \to_{x_2} X_2, \cdots, \Nt_j\to_{x_m} X_m,$$
    replace $r$ in $\ProdRules$ as follows
    \begin{equation*}
    \begin{array}{rcl|rcl}
      \multicolumn{3}{c|}{\text{Former rule(s)}}&\multicolumn{3}{c}{\text{New rule(s)}} \\ \hline \hline
      \multirow{4}{*}{$\Nt_k$}&\multirow{4}{*}{$\to_x$}& \multirow{4}{*}{$\Nt_j.X$} & \Nt_k &\to_{x\cdot x_1}& X_1.X\\
      &&& \Nt_k &\to_{x\cdot x_2}& X_2.X\\
      &&& \multicolumn{3}{|c}{\vdots}\\
      &&& \Nt_k&\to_{x\cdot x_m}& X_m.X.
    \end{array}
    \end{equation*}
    \end{enumerate}
  \end{enumerate}
  One easily verifies that, after any iteration of step~(\ref{step:b}), the grammar no longer contains any rule $\Nt_j \to \Nt_l.X$ such that $l\le j\le k$.
  This holds for $N_k$ which, after the full execution of step~(\ref{step:b}), does not depend from any non-terminal, and is therefore in GNF.
  Furthermore, one may assume that, anytime a non-terminal $N_k$ is considered during step~(\ref{step:c}), every $\Nt_{j}$ such that $k<j$ is in GNF.
  Consequently, the expansion of $N_j$ only creates rules that are GNF-compliant, thus $\Nt_k$ is in GNF at the end of the iteration.

  Let us denote by $\GramWRG=(\PondR',\Voc,\NtSet',\ProdRules',\Axiom)$ the rule-weighted grammar obtained at the end of the execution. One first remarks that both the expansions (Steps~(\ref{step:b}.a) and (\ref{step:c}.a))
  and the chain-rule reversal (Steps~(\ref{step:b}.b)) preserve the generated language, so that the language generated by a non-terminal in $\GramWR$ is also the language generated by its corresponding non-terminal in $\GramWRG$. Furthermore, one can prove, by induction on the number of derivations
  required to generate a word, that the induced probability distribution is kept invariant by rules substitutions operated by the algorithm.

  To that purpose, let us first extend the definition of a rule-weighting function to include a partial derivation instead of a single non-terminal. Namely, $\PondR_{X}(w)$ will represent the weight of $w$, as derived from $X\in \{\ProdRules\cup\Voc\}^*$ and, in particular, one has $\PondR_{\Axiom} \equiv \PondR$.
  Let us now consider the rule-weighting functions $\PondRA$ and $\PondRB$, respectively induced by the grammar before and after a modification:
    \begin{itemize}
      \item {\it Induction hypothesis:} Any word $w$ generated from any $X \{\ProdRules\cup\Voc\}^*$ using $d$ derivations, $1\le d< n$, is such that $\PondR_{X}'(w) = \PondR_{X}''(w)$.
      \item {\it Rule expansion (Steps~(\ref{step:b}.a) and (\ref{step:c}.a)):} Consider a word $w$, generated using $n$ derivations from some non-terminal $\Nt$.
      Clearly, if $\Nt\neq \Nt_k$ or if the first derivation used is $\Nt\to X' \neq \Nt_j.X$, then the rule used to generated $w$ is not affected by the modification. The induction hypothesis applies and one trivially gets $\PondRA_{N_k}(w) = \PondRB_{N_k}(w)$.

       Consider the initial state of the grammar. When $w$ results from a derivation $\Nt\to_x \Nt_j.X$, then there exists a (unique) decomposition $w=w'.w''$, where $w'$ is produced by the application of some rule $\Nt_j \to_{x_i} X_i$, and $w''$ is derived from $X$. The weight of $w$ is then given by
       $\PondRA_{N_k}(w)=x\cdot x_i\cdot \PondRA_{X_i}(w')\cdot \PondRA_{X}(w'')$.

       In the modified version of the grammar, $w=w'.w''$ unambiguous derives from an application of the new rule $\Nt_k \to_{x\cdot x_i} X_i.X$ ($w'\in\Lang{X_i}$ and $w''\in\Lang{X}$), with associated weight $\PondRB_k(w)=x\cdot x_i\cdot \PondR_k''(w')\cdot \PondRB_k(w'')$. Since both $w'$ and $w''$ are generated using less than $n$ derivations, then the induction hypothesis applies and one gets
       $$\PondRB_{N_k}(w)=x\cdot x_i\cdot \PondRB_{X_i}(w')\cdot \PondRB_{X}(w'') = x\cdot x_i\cdot \PondRA_{X_i}(w')\cdot \PondRA_{X}(w'') = \PondRA_{N_k}(w).$$

       \item {\it Chain-rule reversal (Steps~(\ref{step:b}.b)):} Any word produced using the initial \emph{left-recursive} chain-rule can be uniquely decomposed as $w=w'.w''_1.\cdots.w''_p$,
       where  $w'$ is generated from some $\Nt_k \to_{y_q} Y_{q}, q\in[1,m'],$ and each $w''_i$ is generated by some rule $\Nt_k \to_{x_{q_i}}\Nt_k.X_{q_i}, q_i\in[1,m]$. Its weight is therefore given by $\PondRA_{\Nt_k}(w)= y_q\cdot \left(\prod_{i=1}^m x_{q_i}\right)\cdot \PondRA_{{Y_{q}}}(w')\cdot \left(\prod_{i=1}^m \PondRA_{{X_{q_i}}}(w''_i)\right)$.

       After chain-rule reversal, the same decomposition $w=w'.w''_1.\cdots.w''_p$ holds, but the sequence of derivation is now either $\Nt_k \to_{y_q} Y_q$ ($w=w'$),
       or \begin{align*}\Nt_k &\to_{y_q} Y_q.\Nt'_k \to_{x_{q_1}} Y_q.X_{q_1}.\Nt'_k \\
       &\rightsquigarrow Y_q.X_{q_1}.\cdots.X_{q_{m-1}}.\Nt'_k \\
       & \to_{x_{q_m}} Y_q.X_{q_1}.\cdots.X_{q_{m}} \rightsquigarrow w'.w''_1.\cdots.w''_p.\end{align*}
       In both cases, the induction hypothesis applies for each element of the decomposition, and the weight of $w$ in the new decomposition is given by
       \begin{align*}
       \PondRB_{\Nt_k}(w)&= y_q\cdot \left(\prod_{i=1}^m x_{q_i}\right)\cdot \PondRB_{{Y_{q}}}(w')\cdot \left(\prod_{i=1}^m \PondRB_{{X_{q_i}}}(w''_i)\right)\\
                         &= y_q\cdot \left(\prod_{i=1}^m x_{q_i}\right)\cdot \PondRA_{{Y_{q}}}(w')\cdot \left(\prod_{i=1}^m \PondRA_{{X_{q_i}}}(w''_i)\right) = \PondRA_{\Nt_k}(w).
       \end{align*}
    \end{itemize}
    It follows that the weight of any word is left unchanged by the substitutions performed in the algorithm. Since the generated language is also preserved, then such a preservation of the weights implies a preservation of the probabilities. We conclude that the returned grammar, in addition to being in GNF, also induces the same probability distribution as $\GramWR$.
  \end{proof}
\end{proof}
\section{Conclusion}\label{sec:conclusion}
Using a trivial modification of the Blum and Koch algorithm~\cite{Blum1999}, we showed that weighting terminal or weighting rules have equal expressive power, i.e. that any distribution captured by the former formalism is also captured by the other and vice-versa.

While both proofs are relatively trivial, going from rule-weighted grammars to terminal-weighted grammars turned out to be more involved than the alternative, leading to an increase of the number of rules. However, this observation might be deceptive, as the choice of the Greibach Normal Form as an intermediate form is only one out of possibly many alternatives, and one could devise more efficient grammar transforms capturing the same distributions. Moreover, it is noteworthy that, even if one chooses to use GNF grammars, there still seems to be a gap between the $O(|\ProdRules|^4)$ size of the grammar returned by the Blum and Koch algorithm, and the minimum $O(|\ProdRules|^2)$ increase observed for some infinite family of grammars, motivating the search for better GNF transformation algorithms.

\bibliographystyle{amsplain}
\bibliography{biblio}
\end{document}